\documentclass[10pt,twocolumn]{article}

\usepackage[left=.75in, right=.75in, top=1in,bottom=1in]{geometry}
\usepackage{amssymb,amsmath,amsthm}

\usepackage{gensymb}
\usepackage[font={small,sf,bf}]{caption}
\usepackage{xfrac}  
\usepackage{empheq}  
\usepackage[hidelinks]{hyperref}
    \hypersetup{colorlinks=false}
\usepackage{cleveref}  
\usepackage{lipsum}  
\usepackage{xcolor}  
    \definecolor{highlight}{rgb}{1,1,0}
\usepackage{authblk}
\usepackage{graphicx}  
\usepackage{subfig}
\usepackage{grffile}  
\usepackage{tabularx}  
    \newcolumntype{Y}{>{\centering\arraybackslash}X}
\usepackage{colortbl}  
\usepackage{xspace}  
\usepackage{setspace}  
\usepackage{algorithm}  
\usepackage[noend]{algpseudocode}  
    \algrenewcommand\alglinenumber[1]{{\sffamily\color[rgb]{0.15, 0.35, 0.9}\footnotesize#1}}  
    \algrenewcommand{\algorithmiccomment}[1]{\hfill #1}  
    
    \algblock{ParFor}{EndParFor}
    \algnewcommand\algorithmicparfor{\textbf{parfor}}
    \algnewcommand\algorithmicpardo{\textbf{do}}
    \algnewcommand\algorithmicendparfor{\textbf{end\ parfor}}
    \algrenewtext{ParFor}[1]{\algorithmicparfor\ #1\ \algorithmicpardo}
    \algtext*{EndParFor}{} 
    
    \algblock{Streaming}{EndStreaming}
    \algnewcommand\algorithmicstreaming{\textbf{streaming}}
    \algnewcommand\algorithmicstreamingdo{\textbf{}}
    \algnewcommand\algorithmicendstreaming{\textbf{end\ streaming}}
    \algrenewtext{Streaming}[1]{\algorithmicstreaming\ #1\ \algorithmicstreamingdo}
    \algtext*{EndStreaming}{} 
\usepackage{xpatch}  
    \makeatletter
    \xpatchcmd{\algorithmic}{\itemsep\z@}{\itemsep=2 pt}{}{}
    \makeatother
\usepackage{tikz}
    
    \usetikzlibrary{shapes.misc,positioning,fit,calc}
    
\usepackage[version=4]{mhchem}  
\usepackage{cuted}  
    \setlength\stripsep{3pt plus 1pt minus 1pt}
\usepackage{booktabs}  
\usepackage{multirow}  
\usepackage{makecell}  
    
\usepackage{enumitem}  
    \setlist{nosep}  
\usepackage{placeins} 
\usepackage{adjustbox}  
\usepackage{soul}  
\usepackage{etoolbox}

\newtheorem{theorem}{Theorem}
    \AfterEndEnvironment{theorem}{\noindent\ignorespaces}

\makeatletter
    \newcommand{\smallotimes}{\mathbin{\mathpalette\make@small\otimes}}
    \newcommand{\make@small}[2]{%
    \vcenter{\hbox{%
        \scalebox{0.6}{$\m@th#1#2$}%
    }}%
    }
    \let\sv@thm\@thm
    \def\@thm{\vspace{0.75em}\let\indent\relax\sv@thm}
\makeatother

\setlength{\textfloatsep}{6pt plus 0pt minus 0.5pt}
\setlength{\floatsep}{6pt plus 0pt minus 0.5pt}
\setlength{\dbltextfloatsep}{6pt plus 0pt minus 0.5pt}
\setlength{\dblfloatsep}{6pt plus 0pt minus 0.5pt}
\setlength\abovecaptionskip{0.0cm}

\providecommand{\keywords}[1]{{\bfseries\noindent Keywords: }\xspace{#1}}

\usepackage{pxfonts}

\usepackage[normalem]{ulem}

\newcommand{\WISHLIST}[1]{}
\newcommand{\REM}[1]{}

\newcommand{\B}[1]{\mathbf{#1}}
\newcommand{\BB}[1]{\boldsymbol{#1}}
\newcommand{\td}[1]{\tilde{#1}}
\newcommand{\T}[0]{\mathsf{T}}
\newcommand{\Det}[1]{\left| #1 \right|}
\newcommand{\diag}[1]{\mathrm{diag}\left( #1 \right)}
\newcommand{\DIAG}[1]{\mathbf{diag}\left( #1 \right)}

\newcommand{\D}{\partial}
\newcommand{\Cov}[2]{\mathrm{Cov}\left[ #1, #2 \right]}
\newtheorem{corollary}{Corollary}[theorem]
\newcommand{\lp}[0]{\ell_p}
\newcommand{\LL}[0]{\mathcal{L}}
\newcommand{\KK}[0]{\BB{\mathcal{K}}}
\newcommand{\KKinv}[0]{\BB{\mathcal{K}}^{-1}}
\newcommand{\KKtinv}[0]{{\BB{\mathcal{K}}^{(t)}}^{-1}}
\newcommand{\KKinvii}[0]{{\KK^{-1}}_{ii}}
\newcommand{\KKinvY}[0]{\KK^{-1} \B{y}}
\newcommand{\KKinvYi}[0]{\left(\KKinvY\right)_i}

\title{\bfseries\sffamily Detecting Label Noise via Leave-One-Out Cross-Validation}

\author[1*]{\normalsize Yu-Hang Tang}
\author[2]{\normalsize Yuanran Zhu}
\author[1]{\normalsize Wibe A. de Jong}
\affil[1]{\small Lawrence Berkeley National Laboratory, Berkeley, California 94720, USA}
\affil[2]{\small University of California, Merced, Merced, California 95340, USA}
\affil[*]{\small Correspondence: \url{tang@lbl.gov}}
\date{}

\begin{document}

\maketitle

\paragraph{Abstract}

We present a simple algorithm for identifying and correcting \emph{real-valued} noisy labels from a mixture of clean and corrupted sample points using Gaussian process regression. A heteroscedastic noise model is employed, in which additive Gaussian noise terms with independent variances are associated with each and all of the observed labels. Optimizing the noise model using maximum likelihood estimation leads to the containment of the GPR model's predictive error by the posterior standard deviation in leave-one-out cross-validation. A multiplicative update scheme is proposed for solving the maximum likelihood estimation problem under non-negative constraints. While we provide a proof of convergence for certain special cases, the multiplicative scheme has empirically demonstrated monotonic convergence behavior in virtually all our numerical experiments. We show that the presented method can pinpoint corrupted sample points and lead to better regression models when trained on synthetic and real-world scientific data sets.

\vspace{1em}
\keywords{\sffamily supervised learning, regression, uncertainty quantification, output noise, kernel method, Gaussian process regression, regularization, heteroscedasticity, maximum likelihood estimation, multiplicative update, non-negativity, graph, computational chemistry}

\section{Introduction}

Machine learning algorithms that can generate robust models from noisy data are of great practical importance. Here, the noisy labels are considered tempered from their unobserved ``clean'' versions either by an adversary or due to an error in the data acquisition process. Two closely related objectives are usually involved, namely 1) model and training robustness in the presence of label noise and 2) identification and quantification of the noise as feedback to the data collection mechanism.

The focus of existing work is often divided by the type of labels that are of interest. A major body of research has been carried out concerning binary and categorial labels~\cite{crammerLearningGaussianHerding2010,crammerAdaptiveRegularizationWeight2013} with a plethora of theoretical work concerning topics such as lower bounds on the sample complexity ~\cite{aslamSampleComplexityNoisetolerant1996}, sample-efficient strategies for learning binary-valued functions~\cite{cesa-bianchiSampleefficientStrategiesLearning1999},
empirical risk minimization~\cite{natarajanLearningNoisyLabels2013}, learnability of linear threshold functions~\cite{bylanderLearningLinearThreshold1994}, and the role of loss functions~\cite{bartlettConvexityClassificationRisk2006,maNormalizedLossFunctions2020}. A particular application area of interest is image classification and visual recognition.~\cite{biggioSupportVectorMachines2011,xiaoLearningMassiveNoisy2015,liLearningNoisyLabels2017,liDivideMixLearningNoisy2020,karimiDeepLearningNoisy2020,yaoSearchingExploitMemorization2020,jiangSyntheticNoiseDeep2020}. Many works targeting both robust training and noisy label identification have been proposed. For example, Wu \textit{et al.} presented an algorithm for recognizing incorrectly labeled sample points using an iterative topological filtering process~\cite{wuTopologicalFilterLearning2020}. Tanaka \textit{et al.} proposed a joint optimization framework for learning deep neural network parameters and estimating true labels for image classification problems by an alternating update of network parameters and labels~\cite{tanakaJointOptimizationFramework2018}. The strategy of sample selection and using the small-loss trick have also given rise to several algorithms and frameworks for identifying and correcting noisy classification labels~\cite{hanCoteachingRobustTraining2018,jiangMentorNetLearningDataDriven2018,songSELFIERefurbishingUnclean2019,weiCombatingNoisyLabels2020,yuHowDoesDisagreement2019}.

In this paper, our goal is to build regression models and identify corrupted sample points from a \emph{mixture} of clean and noisy \emph{continuous} labels. One motivation behind is to screen high-throughput simulation and experimental data sets, where the results may be corrupted due to random faults but are otherwise of good accuracy~\cite{spotte-smithQuantumChemicalCalculations2021}. There are comparably fewer pieces of work on this topic, despite that real-valued labels are ubiquitous and essential in data-driven modeling for scientific and engineering purposes~\cite{rasheedDigitalTwinValues2020}. The most relevant existing work concerns heteroscedastic Gaussian process regression. Goldberg \textit{et al.} treated the variance of the label noise as a latent function dependent on the input and modeled it with a second Gaussian process~\cite{goldbergRegressionInputdependentNoise1997}. Le \textit{et al.} presented an algorithm to estimate the variance of the Gaussian process locally using maximum a posterior estimation solved by Newton's method~\cite{leHeteroscedasticGaussianProcess2005}. However, both work regard the noise level as a property of the underlying Gaussian random field rather than that of individual sample points. As a consequence, it is not straightforward to single out anomaly on a per-label basis using these methods. Cesa-Bianchi \textit{et al.} proposes a general treatment on online learning of real-valued linear and kernel-based predictors. The method, however, has to use \emph{multiple copies} of each example~\cite{cesa-bianchiOnlineLearningNoisy2011}. To the best of our knowledge, simultaneous robust training and label-wise noise identification for real-valued regression remains as a challenging task.

This paper proposes a new algorithm that can identify noisy labels while generating accurate GPR models using data sets of high noise rates.  The method detects noisy labels using a label-wise heteroscedastic noise model, which leads to a Tikhonov regularization term with independent values for each sample point. While various forms of regularization have been widely used in GPR to control overfitting, our work is the first to relate heteroscedastic regularization with noisy label identification. Optimizing the noise model using maximum likelihood estimation would ensure that the GPR leave-one-out cross-validation errors on the training set are bounds by the predictive uncertainties. A simple multiplicative update scheme, which is monotonically converging for optimizing the noise model, allows quick implementation and fast computation.

\section{Preliminaries}
\label{sec:preliminaries}

\paragraph{Notations} Upper case letters, \textit{e.g.} $M$, denote matrices. Bold lower case letters, \textit{e.g.} $\B{a}$, denote column vectors. Regular lower case letters, \textit{e.g.} $x$, denote scalars. Vectors are assumed to be column vectors by default. $\diag{\B{a}}$ denotes a diagonal matrix whose diagonal elements are specified by $\B{a}$. $\DIAG{A}$ denotes a vector formed by the diagonal elements of $A$. $\B{1}^i$ denotes the $i$-th canonical base vector, \textit{i.e.} an `indicator' vector whose $i$-th element is 1 and all other elements are 0. $\odot$ is the element-wise product operation.

\paragraph{Gaussian process regression} Given a dataset $\mathcal{D}$ of $N$ samples points $\{(\B{x}_i, y_i); \B{x}_i \in \BB{\Phi}, y_i \in \mathbb{R}, i = 1, \ldots, N\}$ and a covariance function, interchangeably called a kernel, $\kappa: \BB{\Phi} \times \BB{\Phi} \mapsto \mathbb{R}$, a Gaussian process regression (GPR) model \cite{Rasmussen2006a} can be learned by treating the labels as instantiations of random variables from a Gaussian random field.

For an unknown sample point at location $\B{x}_*$, the prediction for its label $y_*$ made by a GPR model takes the form of a posterior normal distribution with mean $\mu_* = \B{k}_* K^{-1} \B{y}$ and variance $\sigma_*^2 = \kappa(\B{x}_*, \B{x}_*) - \B{k}_*^\T K^{-1} \B{k}_*$. Here, $K_{ij} \doteq \kappa(\B{x}_i, \B{x}_j)$ is the pairwise matrix of prior covariance between the sample points as computed by the covariance function; $({\B{k}_*})_i \doteq \kappa(\B{x}_*, \B{x}_i)$ is the vector of prior covariance between $\B{x}_*$ and the training sample; $\B{y} \doteq \left[y_1, \ldots, y_N \right]^\T$ is a vector containing all training labels. Note that we have made the assumption, without loss of generality, that the prior means of the labels are all zeros.

The likelihood of a data set $\mathcal{D}$ given a GPR model, which describes how well the model can explain and fit the observed data, usually takes the form of a multivariate normal:
\begin{equation}
    p(\mathcal{D} \mid \BB{\theta}) = (2\pi)^\frac{N}{2} \left|K\right|^{-\frac{1}{2}} \exp\left[ -\frac{1}{2} \B{y}^\T K^{-1} \B{y} \right],\label{eq:GP-likelihood}
\end{equation}
where $\BB{\theta}$ is a vector containing all the model hyperparameters. Training a GPR model often involves maximum likelihood estimation of $\BB{\theta}$ which attemps to find $\BB{\theta}^* = \underset{\BB{\theta}}{\mathrm{argmax}}\ p(\mathcal{D} \mid \BB{\theta})$.

\section{Noisy Label Detection}
\label{sec:anomaly-detection}

\subsection{For a Single Label}
\label{sec:a-motivating-example}

Consider the toy problem where a single sample point $(\B{x}_k, y_k)$ from a dataset $\mathcal{D}$ might contain label noise. \Cref{fig:motivating-example} illustrates the situation using three versions of a data set with twelve clean labels and one potentially noisy label. To figure out to what degree we could trust $y_k$ in absence of the ground truth, an intuition is to examine the posterior likelihood of the suspicious point given a GPR model trained only on the clean data $\mathcal{D}\setminus\{ \B{x}_k \}$. Using the clean-label GPR posterior mean and standard eviation $(\mu_{-k}, \sigma_{-k}) \doteq (\mu_{k}, \sigma_k) \mid \mathcal{D} \setminus \{ \B{x}_k \}$, we can infer the trustworthiness of the label by examining whether an estimate for the magnitude of the noise, which widens the posterior confidence interval, needs to be added to the prior covariance matrix $K$ to ensure $y_k \in [\mu_{-k} - \sigma_{-k}, \mu_{-k} + \sigma_{-k}]$. 

\begin{figure*}
    \includegraphics[width=\textwidth]{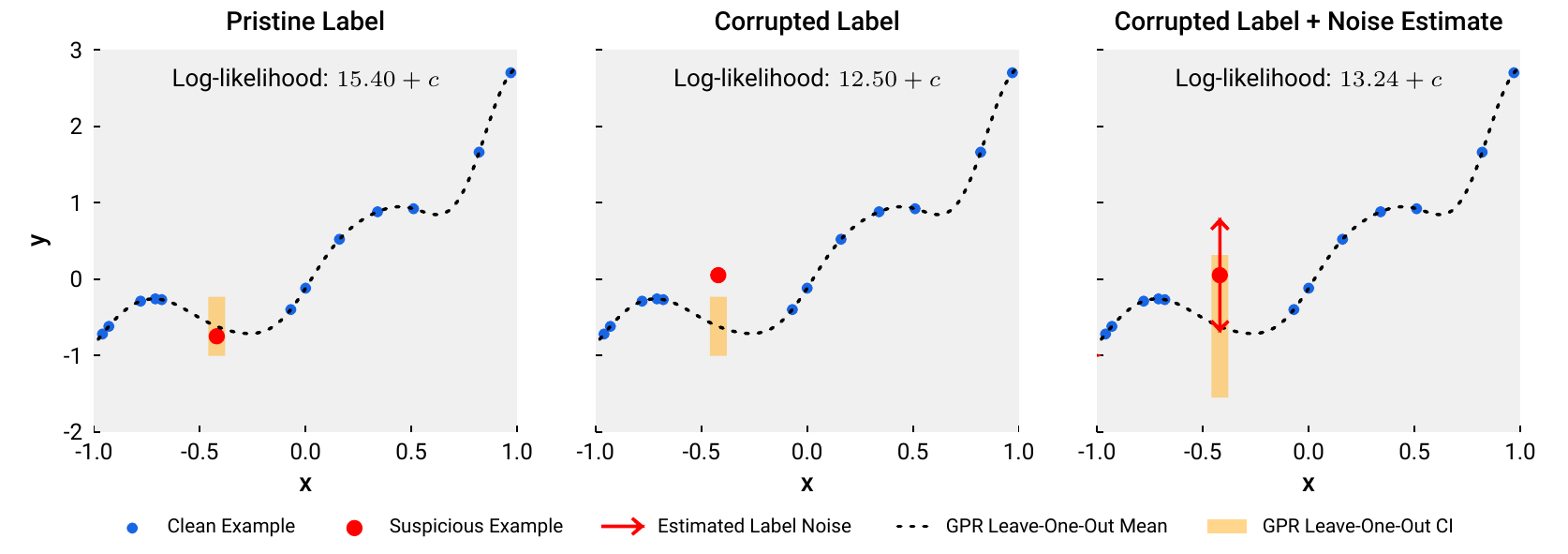}
    \caption{Given a dataset of many clean labels and one potentially noisy label, we could use a GPR model trained on only the clean sample points, \textit{i.e.} a leave-one-out model, to infer the noisiness of the suspicious label. In the left panel, the label of question is indeed clean and is bound by the 1$\sigma$ confidence interval of the leave-one-out model. In the middle panel, the label does contain noise and hence breaks away from the leave-one-out confidence interval. In the right panel, we attempt to bound the noisy label again by the confidence interval by adding an estimate of the noise magnitude into the prior covariance of the sample point. Such estimate results in a higher likelihood of the data and thus confirms that the label contains noise.\label{fig:motivating-example}}
\end{figure*}

The example in \cref{fig:motivating-example} is closely related to leave-one-out cross-validation (LOOCV) using GPR. In fact, the difference between $y_k$ and $\mu_{-k}$ is the leave-one-out cross-validation error at $\B{x}_k$ of a GPR model trained on the entire dataset, for which a closed form solution exists as $e_{-k} = \frac{(K^{-1} \B{y})_k}{(K^{-1})_{kk}}$. The width of the 1$\sigma$ confidence interval also has a closed-form expression as $\sigma_{-k} = \sqrt{\frac{1}{(K^{-1})_{kk}}}$. The determination of whether $y_k$ contains noise is based on the belief that a clean label would require a zero or small prior noise prescription to ensure $e_{-k} < \sigma_{-k}$ .

\subsection{For All Labels}
\label{sec:loocv-and-tikhonov}

A difficulty of directly applying the decision process as depicted in \Cref{fig:motivating-example} to a data set where many labels are noisy is that it is unclear which sample points can be trusted to bootstrap the cross validation process. Instead, the identification of the noisy labels has to be learned via an iterative process.

Now we formalize the algorithm for noisy label detection using leave-one-out cross-validation with GPR. We assume that each observed label $y_i$ is a combination of the ground truth label $f(\B{x}_i)$ with an additive Gaussian noise $\varepsilon_i \sim \mathcal{N}(0, \sigma_i)$ of zero mean and variance $\sigma_i$, \textit{i.e.} $y_i = f(\B{x}_i) + \varepsilon_i$. Further assuming that the noise terms are mutually independent and also independent of the true labels, \textit{i.e.} $\Cov{\varepsilon_i}{\varepsilon_j} = \delta_{ij} \sigma_i \sigma_j$ and $\Cov{y_i}{\varepsilon_j} \equiv 0\ \forall i, j$, the GPR covariance matrix of this training set then becomes
\begin{align}
    \KK = K + \Sigma,
\end{align}
where $K$ is the prior covariance matrix as computed by the kernel, while $\Sigma = \diag{\BB{\sigma}} \doteq \diag{\left[\sigma_1, \ldots, \sigma_N \right]^\T}$ is a diagonal covariance matrix between the noise terms. Note that $\Sigma$ can be regarded as a generalization of the basic Tikhonov regularization term where $\Sigma = \sigma \mathbb{I}$ is uniform across the sample.

To determine $\BB{\sigma}$, which quantifies the error each label contains, we  maximize the likelihood of the dataset with respect to the regularized kernel matrix $\KK$. By plugging $\KK$ into \Cref{eq:GP-likelihood}, we obtain the following negative log-likelihood function:
\begin{equation}
\begin{aligned}
    \LL(\B{y} \mid \B{X}, \kappa, \Sigma)
    & \doteq \log \Det{\KK}
    +\B{y}^\T \KK^{-1} \B{y}
    + c. \\
    & \propto -\log p(\B{y} \mid \B{X}, \kappa, \Sigma)
    \label{eq:negative-log-likelihood}
\end{aligned}
\end{equation}
This leads to the following constrained optimization problem:
\begin{equation}\label{eq:optimization-problem}
\begin{aligned}
    \underset{\BB{\sigma}, \BB{\theta}}{\mathrm{argmin}}\ & \LL(\B{y} \mid \B{X}, \kappa, \Sigma)\\
    \text{subject to } & \sigma_i \ge 0\ \forall\ i.
\end{aligned}
\end{equation}

Following the derivation in \Cref{section:gradient-of-negative-log-likelihood-function}, the gradient of \eqref{eq:negative-log-likelihood} with respect to $\BB{\sigma}$ is
\begin{equation}
    \frac{\D \LL}{\D\BB{\sigma}} = \diag{\KK^{-1}} - \left(\KKinvY\right) \odot \left(\KKinvY\right). \label{eq:dL-dSigma-diagonal}
\end{equation}
Thus, a necessary condition for $\LL$ to reach an optimum subject to the non-negativity constraint on $\BB{\sigma}$ is
\begin{equation}
    \KKinvii - \KKinvYi^2 \ge 0\ \forall\ i.\label{eq:optimality-condition}
\end{equation}
Since $\KKinvii > 0$ due to the fact that $\KK$ and $\kappa$ are positive definite, condition \eqref{eq:optimality-condition} can be rearranged as
\begin{equation}
     \left[\frac{\KKinvYi}{\KKinvii}\right]^2 \le \frac{1}{\KKinvii}\ \forall\ i
    \label{eq:optimality-condition-altform}
\end{equation}

Recall that $\tfrac{\KKinvYi}{\KKinvii}$ and $\tfrac{1}{\KKinvii}$ are the leave-one-out cross-validation error and posterior variance of the regularized GPR model for sample point $i$, respectively. What \cref{eq:optimality-condition-altform} conveys is that
\begin{equation*}
\begin{aligned}
    &\text{LOOCV error $\le$ LOOCV standard deviation $\forall\ i$}\\
    &\text{when $\BB{\sigma}, \BB{\theta}$ optimizes $\LL$ subject to $\BB{\sigma} \ge 0$.}
\end{aligned}
\end{equation*}
In other words, the maximum likelihood estimation of the kernel and the noise model seeks to bound the the leave-one-out cross validation errors on the training set by the predictive uncertainty, thus minimizing the surprise incurred by any inconsistency between the labels and the prior covariance matrix.

\begin{figure*}[ht!]
    \includegraphics[width=\textwidth]{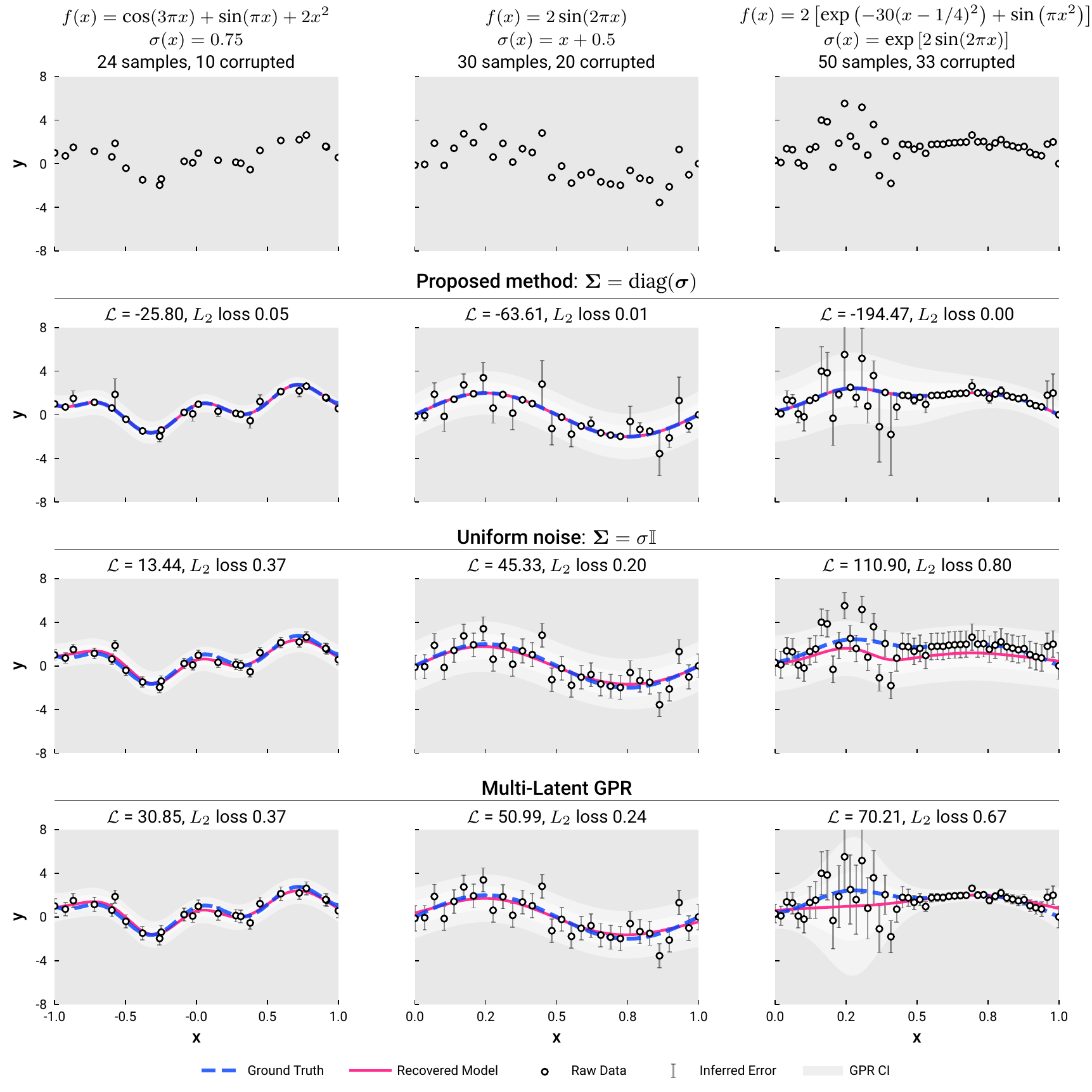}
    \caption{A comparison between the algorithm proposed in \Cref{sec:loocv-and-tikhonov}, the uniform noise model, and a multi-latent heteroscedastic GPR implementation~\cite{deg.matthewsGPflowGaussianProcess2017}. Three sets of 1D functions and noise distributions are tested for illustrative purposes.\label{fig:1D-demo}}
\end{figure*}

In \Cref{fig:1D-demo}, we compare the proposed method against the uniform noise model and a multi-latent GPR model~\cite{goldbergRegressionInputdependentNoise1997,lazaro-gredillaVariationalHeteroscedasticGaussian2011,saulChainedGaussianProcesses2016} using three synthetic examples. In the first example, the ground truth function $f(x) = \cos(3 \pi x) + \sin(\pi x) + 2 x^2$ are measured at 24 points drawn from a uniform grid between $[-1, 1]$ perturbed with i.i.d. noises $\sim \mathcal{N}(0, 0.05)$, while 10 of the 24 labels are contaminated with i.i.d. noises $\sim \mathcal{N}(0, 0.75)$. In the second example, we use the setup in Ref.~\cite{goldbergRegressionInputdependentNoise1997} with a reduced sample size of 30 while contaminating 20 of the labels. In the third example, we use the setup in Ref.~\cite{leHeteroscedasticGaussianProcess2005} with a reduced sample size of 50 while contaminating 33 of the labels. The kernel hyperparameters and the noise parameters in all methods are optimized by multiple maximum likelihood estimation runs from randomized initial guesses. The proposed method consistently delivers better performance in terms of identifying noisy labels and learning accurate regression models.

\section{Solution Techniques}

\subsection{A Multiplicative Update Scheme}

It is challenging to derive closed-form solutions for problem \eqref{eq:optimization-problem} due to the nonconvexity and nonlinearity of the loss function. While solving problem \eqref{eq:optimization-problem} using a gradient-based optimizer might be convenient, especially for the joint optimization of $\BB{\sigma}$ and $\BB{\theta}$ by concatenating $\tfrac{\D \LL}{\D \BB{\sigma}}$ \eqref{eq:dL-dSigma-diagonal} with $\tfrac{\D \LL}{\D \BB{\theta}} = \mathrm{tr}\left( \KKinv \tfrac{\D K}{\D \BB{\theta}} \right) - \left( \KKinvY \right)^\T \tfrac{\D K}{\D \BB{\theta}} \left( \KKinvY \right)$, the rate of convergence could be slow as demonstrated in \Cref{fig:optim-algs-comparison}. Meanwhile, The problem size, which scales linearly with the number of sample points, practically prevents the usage of more sophisticated algorithms such as L-BFGS-B.

However, observe that both terms in the right hand side of \eqref{eq:dL-dSigma-diagonal} possess only non-negative elements. More precisely, 
each element of $\diag{\KKinv}$ is positive as a consequence of the positive-definiteness of $\KK$, while each element of $\left(\KKinvY\right) \odot \left(\KKinvY\right)$ is non-negative due to the element-wise square operation.
This prompts the following multiplicative update scheme for optimizing $\LL$:
\begin{equation}
    \sigma^{(t+1)}_i = \sigma^{(t)}_i \cdot \frac{\left(\KKtinv\B{y}\right)_i^2}{\diag{\KKtinv}_i}, \label{eq:multiplicative-update-rule}
\end{equation}
where the superscript $^{(t)}$ indicates the version of $\BB{\sigma}$ at iteration step $t$. The non-negativity constraint on $\BB{\sigma}$ is automatically honored as long as the starting point $\BB{\sigma}^{(0)}$ is non-negative.

Meanwhile, the multiplicative factor is greater than one and increases $\sigma_i$ after the update when $\KKinvYi^2 > \KKinvii$, \textit{i.e.} when the slope along $\sigma_i$ is negative.
Similarly, the multiplicative factor is less than one and decreases $\sigma_i$ when the slope is positive.
This aligns well with the intuition that the optimization should follow the gradient, whose sign is determined by the relative magnitude of the elements of $\KKinvYi^2$ and $\KKinvii$.

Surprisingly, the multiplicative update scheme has exhibited monotonic convergence behavior in virtually all synthetic and real-world cases that we have tested. The rate of convergence is very fast as shown in \Cref{fig:optim-algs-comparison}, in which we compare the scheme against several gradient descent and quasi-Newton methods.

\begin{figure*}[ht!]
    \includegraphics[width=\textwidth]{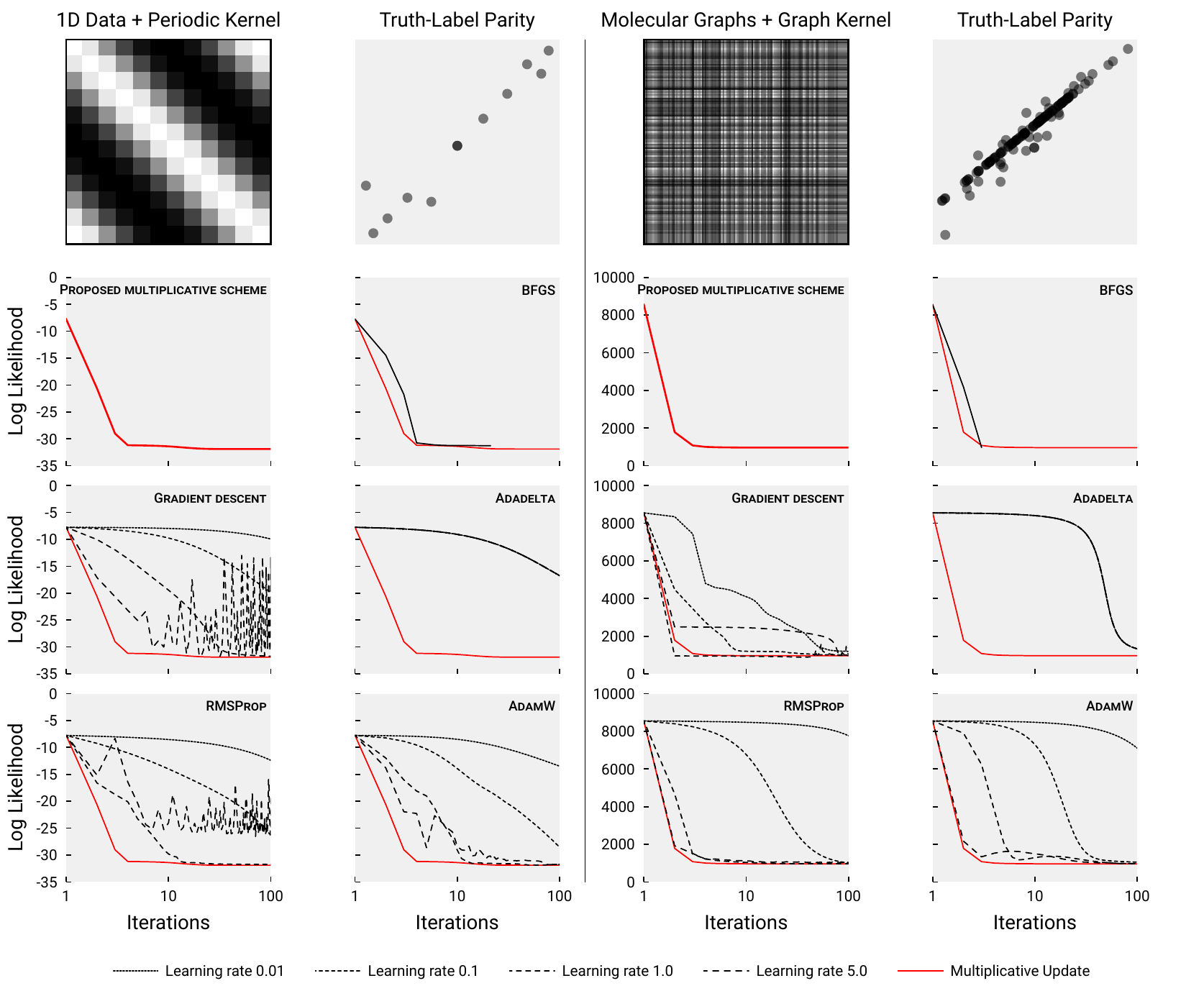}
    \caption{An empirical comparison of the multiplicative update scheme as in \cref{eq:multiplicative-update-rule} against the quasi-Newton method L-BFGS-B and four variants of gradient descent algorithms. The multiplicative update scheme outperforms virtually all other algorithms for optimizing $\BB{\sigma}$ in terms of the rate of convergence and the number of function evaluations.\label{fig:optim-algs-comparison}}
\end{figure*}

The fact that the multiplicative update scheme for optimizing $\BB{\sigma}$ can converge monotonically in a very efficient manner creates many opportunities for the joint optimization of $\BB{\theta}$ and $\BB{\sigma}$. One possibility is to cast the problem into the bi-level optimization paradigm, where $\BB{\theta}$ is treated by an upper-level optimizer, while the multiplicative update of $\BB{\sigma}$ as the lower-level optimizer. Another possibility is to interleave the optimization of $\BB{\sigma}$ and $\BB{\theta}$ in alternating steps, thus forming a block coordinate descent scheme. 

\REM{block coordinate descent?}

One last note is that the multiplicative update scheme reduces to $\sigma^{(t+1)} = \sigma^{(t)} \cdot \frac{\left(\KKtinv\B{y}\right)^\T \left(\KKtinv\B{y}\right)}{\mathrm{Tr}\left[{\KKtinv}\right]}$ when $\Sigma = \sigma \mathbb{I}$. This is also empirically observed to be monotonically converging and hence could be used to efficiently tune the Tikhonov regularization term in standard GPR models.

\subsection{Preliminary Convergence Analysis}

\REM{What if likelihood is unimodal? What if it is not unimodal due to the non-negative constraint?}

It is easy to verify that the stationary points of $\mathcal{L}$ are the fixed points of the multiplicative update rule. To see that, note that $\frac{\D\LL}{\D \BB{\sigma}} = \BB{0}$ implies $\diag{\KKinv}_i = \KKinvYi^2\ \forall\ i$. In that case, we have all the update coefficients $\frac{\diag{\KKinv}_i}{\KKinvYi^2} = 1$, which indicates that the recurrence relationship has reached a fixed point. Also note that the zero vector $\BB{0}$ is a trivial fixed point of the multiplicative update rule.

In this section, we attempt to provide some theoretical insight into the observed monotonic convergence behavior of the multiplicative update scheme. However, a rigorous proof for the general case is not available yet.

\begin{theorem}
For a special case where $K=\diag{\B{k}}$ is a diagonal matrix, if the optimization problem \eqref{eq:optimization-problem} has solutions, then the solution is unique and the multiplicative update scheme \eqref{eq:multiplicative-update-rule} monotonically converges to the unique solution.
\end{theorem}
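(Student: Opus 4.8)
The plan is to exploit the fact that when $K=\diag{\B{k}}$ is diagonal, every quantity in \eqref{eq:optimization-problem} and in the update rule \eqref{eq:multiplicative-update-rule} decouples across coordinates, so that both the optimization and the iteration reduce to $N$ independent scalar problems solvable by elementary calculus. Concretely, $\KK = K + \Sigma = \diag{\B{k} + \BB{\sigma}}$, so from \eqref{eq:negative-log-likelihood} the loss separates as $\LL = \sum_{i=1}^N f_i(\sigma_i) + c$ with $f_i(s) \doteq \log(k_i + s) + y_i^2/(k_i + s)$, and \eqref{eq:multiplicative-update-rule} becomes the $N$ uncoupled recurrences $\sigma_i^{(t+1)} = g_i\bigl(\sigma_i^{(t)}\bigr)$, where $g_i(s) \doteq y_i^2 s/(k_i + s)$. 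Positive definiteness of $\kappa$ forces $k_i > 0$ for all $i$, so each $f_i$ and $g_i$ is well defined and smooth on $[0,\infty)$.

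First I would establish uniqueness by analyzing each $f_i$ on $[0,\infty)$. Since $f_i'(s) = (k_i + s - y_i^2)/(k_i+s)^2$ changes sign exactly once, from negative to positive, at $s = y_i^2 - k_i$ when $y_i^2 > k_i$, and is non-negative throughout when $y_i^2 \le k_i$, the function $f_i$ is unimodal on $[0,\infty)$ with the single minimizer $\sigma_i^\star = \max(y_i^2 - k_i,\,0)$. As $\LL$ is the sum of the $f_i$, its constrained minimizer $\BB{\sigma}^\star$ is therefore unique. (In fact $\LL$ is continuous and coercive on $[0,\infty)^N$, so the minimizer always exists and the hypothesis ``if it has solutions'' is automatically satisfied in this case.)

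For the monotone convergence I would fix a coordinate, drop subscripts, and study the scalar iteration $s \mapsto g(s) = y^2 s/(k+s)$. The map $g$ is strictly increasing on $[0,\infty)$ with $g(0) = 0$; its only fixed points are $0$ and $s^\star \doteq y^2 - k$ (the latter relevant only when positive); and $g(s) - s = s\,(s^\star - s)/(k+s)$. If $y^2 \le k$ then $s^\star = \sigma_i^\star = 0$ and $g(s) < s$ for every $s > 0$, so any strictly positive orbit strictly decreases and is bounded below by $0$, hence converges to $0$. If $y^2 > k$ then, using monotonicity of $g$ and $g(s^\star) = s^\star$, the intervals $(0,s^\star]$ and $[s^\star,\infty)$ are each invariant; on the former $g(s) > s$, on the latter $g(s) < s$, so an orbit started in either is monotone, stays in that interval, and therefore converges to its unique fixed point $s^\star$. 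Reassembling the coordinates gives $\BB{\sigma}^{(t)} \to \BB{\sigma}^\star$; and since each $\sigma_i^{(t)}$ moves monotonically toward $\sigma_i^\star$ while staying on one side of the minimizer of the unimodal $f_i$, each $f_i(\sigma_i^{(t)})$ is non-increasing in $t$, whence so is $\LL(\BB{\sigma}^{(t)})$.

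The only genuine subtlety, and the step I would be most careful about, is the trivial fixed point $\BB{0}$ of \eqref{eq:multiplicative-update-rule}: a coordinate initialized at $\sigma_i^{(0)} = 0$ remains zero forever, so convergence to the solution rather than to $\BB{0}$ requires $\sigma_i^{(0)} > 0$ whenever $\sigma_i^\star > 0$, and I would state this strict-positivity requirement on the initialization explicitly. Apart from that caveat the proof is a self-contained one-dimensional fixed-point argument, and I expect the main work to be the routine bookkeeping of the case split between $y_i^2 > k_i$ and $y_i^2 \le k_i$ and the boundary behavior at $s = 0$.
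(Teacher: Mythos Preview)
Your argument is correct and complete; the decoupling into scalar problems and the monotone fixed-point analysis of $g(s)=y^2 s/(k+s)$ is sound, and you are right to flag the strict-positivity requirement on the initialization, which the paper does not address.

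The paper takes a somewhat different route after the decoupling: instead of studying $g$ directly, it passes to the reciprocal variable $p=1/\sigma$, under which the iteration becomes the \emph{affine} map $p\mapsto 1/y^2 + (k/y^2)\,p$. This immediately gives a contraction with ratio $k/y^2<1$ when $y^2>k$, hence geometric (R-linear) convergence, and in the borderline case $k=y^2$ the affine recursion yields $p^{(t)}\to\infty$, i.e.\ $\sigma^{(t)}\to 0$, at rate $\mathcal{O}(1/t)$. So the paper's change of variables buys an explicit rate, which it then uses in the subsequent corollary on R-linear convergence of $\LL(\BB{\sigma}^{(t)})$; your direct monotonicity argument is more elementary and also handles the regime $y^2<k$ (where $\sigma^\star=0$) cleanly, which the paper's formulation, relying on $0<k/y^2\le 1$, leaves implicit. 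If you want to recover the paper's corollary as well, you would need to supplement your argument with a rate, and the reciprocal trick is the quickest way to do that.
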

\begin{proof}
We use the fixed-point theorem to get the result. When $K$ is diagonal, $\frac{\D\LL}{\D \BB{\sigma}} = \BB{0}$ can be solved independently for each $i$ which yields the exact result:
\begin{align*}
 \frac{y_i^2}{(K_{ii}+\sigma^{*}_i)^2}=\frac{1}{(K_{ii}+\sigma^{*}_i)^2}
 \qquad \Rightarrow\qquad 
\sigma^{*}_i=y_i^2-K_{ii}.
\end{align*}
Since $K=\diag{\B{k}}$ is a symmetric, positive definite matrix, the constraint $\sigma^{*}_i\geq 0$ implies $0< K_{ii}/y_i^2\leq 1$. For such case, iteration scheme \eqref{eq:multiplicative-update-rule} reduces to
\begin{align*}
    \sigma^{(t+1)}_i = \sigma^{(t)}_i\frac{y_i^2}{\sigma^{(t)}_i+K_{ii}}, \qquad 1\leq i\leq N,
\end{align*}
which can be reformulated as 
\begin{align*}
    \frac{1}{\sigma^{(t+1)}_i} = \frac{1}{y_i^2}+\frac{K_{ii}}{y_i^2}\frac{1}{\sigma^{(t)}_i}=g_i\left(\frac{1}{\sigma^{(t)}_i}\right).
\end{align*}
Set $p^{(t+1)}_{i}=1/\sigma^{(t+1)}_i$ and set $p_i^*=1/\BB{\sigma}_i^*$ to be the exact fixed point. Now, the mean-value theorem implies the $(t+1)$-step error can be bounded as
\begin{align*}
     \left|p^{(t+1)}_{i}-p^{*}_{i}\right|
    &=\left|g\left(p^{(t)}_{i}\right)-g\left(p^{*}_{i}\right)\right| \\
    &=\left|g'\left(\xi^{(t+1)}\right)\right|\left|p^{(t)}_{i}-p^{*}_{i}\right| \\
    &=\frac{K_{ii}}{y_i^2}\left|p^{(t)}_{i}-p^{*}_{i}\right|.
\end{align*}
Given the prerequisite $0<K_{ii}/y_i^2\leq 1$, when $K_{ii}/y_i^2=k_i<1$, we can get the monotonic convergence result:
\begin{align}\label{sigma_1neq0}
    \left|p^{(t+1)}_{i}-p^{*}_{i}\right|\leq k_i^n \left|p^{(0)}_{i}-p^{*}_{i}\right| \rightarrow 0 \qquad \text{as}\qquad n\rightarrow \infty.
\end{align}
If otherwise $K_{ii}/y_i^2=k_i=1$, we have 
\begin{align}\label{sigma_1=0}
    p^{(t+1)}_{i}=\frac{n}{y_i^2}+p^{(0)}_{i} \rightarrow +\infty \qquad \text{as}\qquad n\rightarrow \infty,
\end{align}
which corresponds to a monotonically convergent $\sigma^{(t)}_i\rightarrow 0$ since $\sigma^{(t)}_i=1/p^{(t)}_{i}$. Hence, for this special case, we conclude that if there exists a solution to \eqref{eq:optimization-problem}, then the solution is unique and the multiplicative update scheme \eqref{eq:multiplicative-update-rule} will monotonically converge to this unique solution.
\end{proof}

Immediately, we can get the convergence of the loss function as below.
\begin{corollary}
For the special case where $K=\diag{\B{k}}$ is a diagonal matrix, if the optimization problem \eqref{eq:optimization-problem} has solutions, then the multiplicative update scheme  \eqref{eq:multiplicative-update-rule} generates a convergent sequence $\LL(\BB{\sigma}^{(t)})\rightarrow \LL(\BB{\sigma}^{*})$, where $\LL(\BB{\sigma}^{*})$ is the optimal loss. If the exact solution $\sigma^{*}_i>0$ for all $1\leq i\leq N$, then the convergence rate is $R$-linear, otherwise, it is $R$-superlinear.
\end{corollary}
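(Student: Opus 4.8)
The plan is to exploit the fact that, when $K=\diag{\B{k}}$, both the loss and the update \eqref{eq:multiplicative-update-rule} decouple completely across coordinates, so that the corollary reduces to a one–dimensional statement applied $N$ times. Writing $\KK=\diag{\B{k}+\BB{\sigma}}$ gives $\LL(\BB{\sigma})=\sum_{i=1}^{N}\ell_i(\sigma_i)+c$ with $\ell_i(s)\doteq\log(K_{ii}+s)+y_i^{2}/(K_{ii}+s)$, each $\ell_i$ smooth on $[0,\infty)$, and the update moves each $\sigma_i$ independently. First I would dispose of the convergence claim: the Theorem already supplies $\sigma_i^{(t)}\to\sigma_i^{*}=y_i^{2}-K_{ii}$ for every $i$, so continuity of $\ell_i$ at $\sigma_i^{*}$ gives $\ell_i(\sigma_i^{(t)})\to\ell_i(\sigma_i^{*})$ and, summing, $\LL(\BB{\sigma}^{(t)})\to\LL(\BB{\sigma}^{*})$. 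From $\ell_i'(s)=(K_{ii}+s-y_i^{2})/(K_{ii}+s)^{2}$ one reads off that $\ell_i$ decreases on $[0,\sigma_i^{*}]$ and increases afterwards, so $\sigma_i^{*}$ is the global minimizer of $\ell_i$ on $[0,\infty)$ and hence $\LL(\BB{\sigma}^{*})$ is the optimal loss, exactly as asserted.

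For the rate I would Taylor-expand $\ell_i$ about $\sigma_i^{*}$. The key observation is that $\ell_i'(\sigma_i^{*})=0$ in \emph{both} regimes: when $\sigma_i^{*}>0$ this is interior stationarity, and when $\sigma_i^{*}=0$ the solvability hypothesis forces $K_{ii}=y_i^{2}$, whence $\ell_i'(0)=(K_{ii}-y_i^{2})/K_{ii}^{2}=0$ as well; a direct computation also gives $\ell_i''(\sigma_i^{*})>0$ in each case. Consequently $\ell_i(\sigma_i^{(t)})-\ell_i(\sigma_i^{*})=\tfrac12\,\ell_i''(\sigma_i^{*})\,(\sigma_i^{(t)}-\sigma_i^{*})^{2}\,(1+o(1))$, so the loss gap inherits the \emph{square} of the coordinatewise error of $\BB{\sigma}$, and it remains to insert the per-coordinate rates already obtained inside the proof of the Theorem. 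If $\sigma_i^{*}>0$ for all $i$, the geometric bound $|p_i^{(t)}-p_i^{*}|\le (K_{ii}/y_i^{2})^{t}\,|p_i^{(0)}-p_i^{*}|$ transfers through $\sigma_i=1/p_i$ to $|\sigma_i^{(t)}-\sigma_i^{*}|=\Theta\big((K_{ii}/y_i^{2})^{t}\big)$, so the loss gap is $O\big((\max_i K_{ii}/y_i^{2})^{2t}\big)$ — geometric, i.e. $R$-linear. If instead $\sigma_i^{*}=0$ for some $i$, the explicit formula $p_i^{(t)}=p_i^{(0)}+t/y_i^{2}$ from that proof gives $\sigma_i^{(t)}=\Theta(1/t)$ for such a coordinate, so its loss contribution is $\Theta(1/t^{2})$ and dominates the geometrically vanishing contributions of the coordinates with $\sigma_j^{*}>0$, leaving an overall loss gap of order $1/t^{2}$.

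The one step that genuinely needs care is this last one: one must notice that the gradient $\ell_i'$ vanishes at a \emph{boundary} optimum too, so that the quadratic term is still the leading term of the expansion and the slow $\Theta(1/t)$ decay of $\sigma_i^{(t)}$ gets squared down to a $\Theta(1/t^{2})$ decay of the loss gap; a careless argument would retain a spurious linear term and mispredict the rate. The only accompanying bookkeeping is that the global rate is the envelope of the $N$ coordinatewise rates, so in mixed situations the slowest (boundary) coordinate dictates the answer, which is why the dichotomy in the statement is ``$\sigma_i^{*}>0$ for all $i$'' versus ``not''. Everything else is elementary one-dimensional calculus composed with the convergence already established in the Theorem.
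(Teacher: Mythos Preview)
Your argument is correct and takes a sharper route than the paper's. The paper bounds the loss gap \emph{linearly} in the coordinate error: it applies the Lipschitz estimates
\[
\bigl|\log(\sigma_i^{(t)}+K_{ii})-\log(\sigma_i^{*}+K_{ii})\bigr|\le K_{ii}^{-1}\,\bigl|\sigma_i^{(t)}-\sigma_i^{*}\bigr|
\]
and the analogous one for $y_i^2/(\sigma_i+K_{ii})$ to obtain $\bigl|\LL(\BB{\sigma}^{(t)})-\LL(\BB{\sigma}^{*})\bigr|\le C\,\lVert\BB{\sigma}^{(t)}-\BB{\sigma}^{*}\rVert_{\infty}$, and then reads the rate directly off the Theorem. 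You instead exploit the observation that $\ell_i'(\sigma_i^{*})=0$ holds \emph{even at the boundary} (since $\sigma_i^{*}=0$ forces $K_{ii}=y_i^{2}$ under the solvability hypothesis), so the Taylor expansion of each $\ell_i$ starts at second order and the loss gap is controlled \emph{quadratically} by the coordinate error. This buys you a genuinely tighter bound in the boundary case---$\Theta(1/t^{2})$ versus the paper's $O(1/t)$---while in the all-interior case both arguments give a geometric ($R$-linear) rate, yours with a squared contraction factor as a free byproduct. One caveat worth recording for the write-up: under the standard definitions neither route actually yields $R$-\emph{superlinear} convergence when some $\sigma_i^{*}=0$; both $O(1/t)$ and $\Theta(1/t^{2})$ are $R$-\emph{sublinear}, so the corollary's phrasing appears to invert fast and slow, and your sharper analysis makes this more visible rather than less.
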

\begin{proof}
When $K=\diag{\B{k}}$ is a diagonal matrix, using the definition of $\LL$ \eqref{eq:negative-log-likelihood}, we can get the loss at the $t$-th iteration: 
\begin{align}\label{eqn:L}
\LL(\BB{\sigma}^{(t)})\propto \sum_{i=1}^N\log\left(\sigma_{i}^{(t)}+K_{ii}\right)+\sum_{i=1}^N\frac{y_i^2}{\sigma_{i}^{(t)}+K_{ii}}
\end{align}
To get the $R$-convergence of $\LL$, we only need to prove that the upper bound of $\left|\LL(\BB{\sigma}^{(t)})-\LL(\BB{\sigma}^{*})\right|$ converges to 0. Consider the $i$-th term in the first summation in \eqref{eqn:L}, we have the error estimate: 
\begin{align}\label{eqn:L1}
\left|\log\left(\sigma_{i}^{(t)}+K_{ii}\right)-\log\left(\sigma_{i}^{*}+K_{ii}\right)\right|\leq \frac{1}{K_{ii}}\left|\sigma_{i}^{(t)}-\sigma_{i}^{*}\right|.
\end{align}
Here we used the Lipschitz condition of $\log(x)$ on domain $[K_{ii},+\infty)$, i.e.  $\left|\log(x)-\log(y)\right|\leq\left|x-y\right|/K_{ii}$. For the $i$-th term in the second summation in \eqref{eqn:L}, we also have 
\begin{align}\label{eqn:L2}
& \left|\frac{y_i^2}{\sigma_{i}^{(t)}+K_{ii}}-\frac{y_i^2}{\sigma_{i}^{*}+K_{ii}}\right| \\
\leq & \left|\frac{y_i^2}{\left(\sigma_{i}^{(t)}+K_{ii}\right)\left(\sigma_{i}^{*}+K_{ii}\right)}\right|\left|\sigma_{i}^{(t)}-\sigma_{i}^{*}\right| \\
\leq & \frac{y_i^2}{K_{ii}\left(\sigma_{i}^{*}+K_{ii}\right)}\left|\sigma_{i}^{(t)}-\sigma_{i}^{*}\right|
\end{align}
Combining estimate \eqref{eqn:L1}-\eqref{eqn:L2}, we can find a constant $C$ such that 
\begin{align}\label{eqn:L3}
\left|\LL(\BB{\sigma}^{(t)})-\LL(\BB{\sigma}^{*})\right|\leq C
\left\|\BB{\sigma}^{(t)}-\BB{\sigma}^{*}\right\|_{\infty}.
\end{align}
The above upper bound converges to 0 linearly if the exact solution $\sigma^{*}_i>0$ for all $1\leq i\leq N$. Otherwise, we can only get the superlinear convergence according to \eqref{sigma_1=0} because the convergence rate is of the order $\mathcal{O}(1/n)$ in the $j$-th direction where $\sigma^{*}_j=0$.
\end{proof}

\REM{Convexity of the log likelihood when restricted to a neighborhood of each optimum?}

\REM{Why the Multiplicative rule give rise to monotonic convergence?}

\REM{Jacobian of fixed-point iteration:

\Cref{eq:multiplicative-update-rule} can be regarded as a fixed-point iteration
\begin{equation}
\begin{aligned}
    \BB{\sigma}_i = g(\BB{\sigma})_i = \BB{\sigma_i} f(\BB{\sigma}_i)
\end{aligned}
\end{equation}
where
\begin{equation}
\begin{aligned}
    f(\BB{\sigma})_i = \frac{\KKinvYi^2}{\KKinvii}.
\end{aligned}
\end{equation}

Function $g(\cdot)$ has the Jacobian
\begin{equation}
\begin{aligned}
\frac{\partial g(\BB{\sigma})_i}{\partial \BB{\sigma}_j}
    &= 
    \delta_{ij} f(\BB{\sigma})_j +
    \BB{\sigma}_i \left[
        -2 \frac{\mathbf{a}_i \mathbf{a}_j (\KKinv)_{ij}}{\mathbf{b}_i}
        + \frac{\mathbf{a}_i^2 (\KKinv)_{ij}^2}{\mathbf{b}_i^2}
    \right] \\
    &= \delta_{ij} \frac{\KKinvYi^2}{\KKinvii} +
    \BB{\sigma}_i \left[
        -2 \frac{(\KKinv \mathbf{y})_i (\KKinv \B{y})_j (\KKinv)_{ij}}{\KKinvii}
        + \frac{\KKinvYi^2 (\KKinv)_{ij}^2}{(\KKinvii)^2}
    \right],
\end{aligned}
\end{equation}
where $\B{a}_i \doteq \KKinvYi$ and $\B{b}_i \doteq \KKinvii$.
}

\REM{Scalar case as continued fraction}

\subsection{Penalty and Sparsification}

If a sparse solution is desired or if we want to prevent the method from being too aggressive in calling out noisy labels, an $\lp$ penalty term can be introduced into the loss function:
\begin{equation}
\begin{aligned}
    \LL^{p}_{\lambda}
    &= \LL + \lambda \lVert \BB{\sigma} \rVert_p^p.
\end{aligned}
\end{equation}

Due to the non-negative nature of the gradient of the $\lp$ term, it can also be incorporated into the multiplicative update scheme as
\begin{equation}
    \sigma^{(t+1)}_i = \sigma^{(t)}_i \cdot \frac{\left(\KKtinv\B{y}\right)_i^2}{\diag{\KKtinv}_i + \lambda\,p \left| \sigma_i^{(t)} \right|^{p-1}}. \label{eq:multiplicative-update-rule1}
\end{equation}

\REM{While the method that we proposed has been described in a context of noisy label learning, the problem is closely related to anomaly detection. Common outlier and anomaly detection algorithms use sampling-based approaches such as MCMC. Here, a sample point is anomalous if it deviates from the LOOCV mean by too much.}

\REM{Talk about the noise model also quantifies the portion of the variance of the dataset that is unexplainable by the model.}

\section{Experiments}

\begin{table*}[htbp!]
\caption{Label Noise - Rate/Level: the percentage of corrupted labels and the ratio between the noise and the standard deviation of the pristine labels; R\textsuperscript{2}: the coefficient of determination between the inferred and actual label noise; AUC: area under the ROC curve of a `noisy label' classifier that thresholds the learned $\sigma_i$; Precision at Recall Level: precision of the classifier at specified recall levels. Regression accuracy - plain/basic/full: $\BB{\Sigma} = 0,\ \sigma \mathbb{I},\ \diag{\BB{\sigma}}$, respectively.\label{table:qm7}}
\vspace{0.5em}
\small
\setlength{\tabcolsep}{2.0pt}
\begin{tabular}{@{\extracolsep{3pt}}llllllrrr}
\multicolumn{9}{c}{QM7 Atomization Energy} \\
\toprule
\multicolumn{2}{c}{\thead[tc]{Label Noise}} & \multicolumn{4}{c}{\thead[tc]{Noisy Label Detection\\{\scriptsize\mdseries Thresholding on $\sigma_i$}}} & \multicolumn{3}{c}{\thead[tc]{GPR Accuracy\\{\scriptsize\mdseries 5-fold CV MAE (kcal/mol)}}}\\ \cline{1-2} \cline{3-6} \cline{7-9}
\multirow{2}{*}{\thead{Rate}} & \multirow{2}{*}{\thead{Level}} & \multirow{2}{*}{\thead{R\textsuperscript{2}}} & \multirow{2}{*}{\thead{AUC}} & \multicolumn{2}{c}{\thead[tc]{Precision\\{\mdseries\scriptsize at Recall Level}}} & \multicolumn{3}{c}{\thead[tc]{\mdseries \scriptsize (Pristine Data: 1.05)}}\\ \cline{5-6}
& & & & \thead[tl]{70\%} & \thead[tl]{95\%} & \thead[tl]{Plain} & \thead[tl]{Basic} & \thead[tl]{Full} \\
\midrule
\multirow{4}{*}{10\%} & 10\% & 0.59 & .952 &   76.8\% &  27.9\% &      3.75 &      2.26 &      1.14 \\
    & 50\% & 0.98 & .991 &   96.6\% &  79.7\% &     17.78 &      4.93 &      1.12 \\
    & 100\% & 1.00 & .996 &   97.3\% &  83.7\% &     34.59 &      7.05 &      1.12 \\
    & 200\% & 1.00 & .997 &   98.6\% &  96.9\% &     68.83 &     10.21 &      1.12 \\
\cline{1-9}
\multirow{4}{*}{30\%} & 10\% & 0.86 & .949 &   92.4\% &  57.8\% &      6.55 &      2.85 &      1.20 \\
    & 50\% & 0.99 & .988 &   99.3\% &  92.3\% &     31.89 &      6.68 &      1.24 \\
    & 100\% & 1.00 & .994 &   99.3\% &  95.3\% &     64.18 &      9.83 &      1.23 \\
    & 200\% & 1.00 & .997 &   99.5\% &  98.9\% &    128.08 &     15.76 &      1.25 \\
\cline{1-9}
\multirow{4}{*}{50\%} & 10\% & 0.92 & .944 &   95.7\% &  77.7\% &      8.32 &      3.19 &      1.34 \\
    & 50\% & 1.00 & .987 &   99.6\% &  96.5\% &     42.86 &      7.13 &      1.38 \\
    & 100\% & 1.00 & .993 &   99.7\% &  97.6\% &     82.79 &     12.49 &      1.40 \\
    & 200\% & 1.00 & .995 &   99.8\% &  99.5\% &    167.64 &     16.23 &      1.40 \\
\cline{1-9}
\multirow{4}{*}{70\%} & 10\% & 0.89 & .912 &   96.7\% &  81.1\% &     10.10 &      3.57 &      1.90 \\
    & 50\% & 1.00 & .984 &   99.8\% &  98.0\% &     49.01 &      8.12 &      1.78 \\
    & 100\% & 1.00 & .992 &   99.9\% &  98.9\% &     99.44 &     12.00 &      1.70 \\
    & 200\% & 1.00 & .996 &   99.9\% &  99.7\% &    198.57 &     17.31 &      1.72 \\
\cline{1-9}
\multirow{4}{*}{90\%} & 10\% & 0.81 & .818 &   97.3\% &  92.0\% &     11.74 &      3.76 &      3.94 \\
    & 50\% & 0.99 & .961 &   99.9\% &  98.1\% &     57.50 &      8.59 &      4.07 \\
    & 100\% & 1.00 & .982 &   99.9\% &  99.3\% &    112.77 &     13.20 &      3.10 \\
    & 200\% & 1.00 & .992 &  100.0\% &  99.9\% &    221.00 &     19.08 &      2.90 \\
\bottomrule
\end{tabular}
\hfill
\small
\setlength{\tabcolsep}{2.0pt}
\begin{tabular}{@{\extracolsep{3pt}}llllllrrr}
\multicolumn{9}{c}{QM9 8K Subset Atomization Energy} \\
\toprule
\multicolumn{2}{c}{\thead[tc]{Label Noise}} & \multicolumn{4}{c}{\thead[tc]{Noisy Label Detection\\{\scriptsize\mdseries Thresholding on $\sigma_i$}}} & \multicolumn{3}{c}{\thead[tc]{GPR Accuracy\\{\scriptsize\mdseries 5-fold CV MAE (kcal/mol)}}}\\ \cline{1-2} \cline{3-6} \cline{7-9}
\multirow{2}{*}{\thead{Rate}} & \multirow{2}{*}{\thead{Level}} & \multirow{2}{*}{\thead{R\textsuperscript{2}}} & \multirow{2}{*}{\thead{AUC}} & \multicolumn{2}{c}{\thead[tc]{Precision\\{\mdseries\scriptsize at Recall Level}}} & \multicolumn{3}{c}{\thead[tc]{\mdseries \scriptsize (Pristine Data: 1.45)}}\\ \cline{5-6}
& & & & \thead[tl]{70\%} & \thead[tl]{95\%} & \thead[tl]{Plain} & \thead[tl]{Basic} & \thead[tl]{Full} \\
\midrule
\multirow{4}{*}{10\%} & 10\% & 0.69 & .952 &  72.4\% &  32.5\% &      5.61 &      2.98 &      1.54 \\
    & 50\% & 0.99 & .983 &  91.0\% &  72.9\% &     26.02 &      5.80 &      1.58 \\
    & 100\% & 1.00 & .993 &  94.1\% &  84.3\% &     52.52 &      7.99 &      1.55 \\
    & 200\% & 1.00 & .994 &  96.7\% &  89.7\% &    104.55 &     10.74 &      1.55 \\
\cline{1-9}
\multirow{4}{*}{30\%} & 10\% & 0.86 & .945 &  90.2\% &  60.5\% &      8.99 &      3.63 &      1.71 \\
    & 50\% & 0.99 & .986 &  97.5\% &  90.9\% &     46.06 &      7.48 &      1.72 \\
    & 100\% & 1.00 & .989 &  98.1\% &  93.5\% &     89.46 &     11.03 &      1.72 \\
    & 200\% & 1.00 & .995 &  99.0\% &  97.3\% &    181.18 &     15.78 &      1.70 \\
\cline{1-9}
\multirow{4}{*}{50\%} & 10\% & 0.91 & .937 &  94.8\% &  75.4\% &     12.09 &      4.14 &      1.91 \\
    & 50\% & 0.99 & .982 &  98.9\% &  95.3\% &     55.84 &      7.73 &      1.94 \\
    & 100\% & 1.00 & .988 &  99.1\% &  96.7\% &    117.07 &     10.17 &      1.95 \\
    & 200\% & 1.00 & .995 &  99.7\% &  99.1\% &    239.49 &     16.50 &      1.91 \\
\cline{1-9}
\multirow{4}{*}{70\%} & 10\% & 0.89 & .897 &  96.0\% &  79.0\% &     14.19 &      4.34 &      2.59 \\
    & 50\% & 0.99 & .979 &  99.5\% &  97.4\% &     71.74 &      8.88 &      2.39 \\
    & 100\% & 1.00 & .987 &  99.6\% &  98.4\% &    137.61 &     12.37 &      2.28 \\
    & 200\% & 1.00 & .994 &  99.8\% &  99.5\% &    281.35 &     19.02 &      2.31 \\
\cline{1-9}
\multirow{4}{*}{90\%} & 10\% & 0.80 & .809 &  97.2\% &  91.3\% &     16.01 &      4.87 &      5.07 \\
    & 50\% & 0.99 & .955 &  99.9\% &  97.7\% &     80.58 &      9.39 &      5.18 \\
    & 100\% & 1.00 & .971 &  99.8\% &  98.5\% &    157.74 &     14.64 &      4.99 \\
    & 200\% & 1.00 & .990 &  99.9\% &  99.8\% &    317.14 &     18.79 &      3.85 \\
\bottomrule
\end{tabular}
\\
\small
\setlength{\tabcolsep}{2.0pt}
\begin{tabular}{@{\extracolsep{3pt}}llllllrrr}
\multicolumn{9}{c}{QM9 8K Subset Polarizability} \\
\toprule
\multicolumn{2}{c}{\thead[tc]{Label Noise}} & \multicolumn{4}{c}{\thead[tc]{Noisy Label Detection\\{\scriptsize\mdseries Thresholding on $\sigma_i$}}} & \multicolumn{3}{c}{\thead[tc]{GPR Accuracy\\{\scriptsize\mdseries 5-fold CV MAE (\AA\textsuperscript{3})}}}\\ \cline{1-2} \cline{3-6} \cline{7-9}
\multirow{2}{*}{\thead{Rate}} & \multirow{2}{*}{\thead{Level}} & \multirow{2}{*}{\thead{R\textsuperscript{2}}} & \multirow{2}{*}{\thead{AUC}} & \multicolumn{2}{c}{\thead[tc]{Precision\\{\mdseries\scriptsize at Recall Level}}} & \multicolumn{3}{c}{\thead[tc]{\mdseries \scriptsize (Pristine Data: 0.53)}}\\ \cline{5-6}
& & & & \thead[tl]{70\%} & \thead[tl]{95\%} & \thead[tl]{Plain} & \thead[tl]{Basic} & \thead[tl]{Full} \\
\midrule
\multirow{4}{*}{10\%} & 10\% & -19.69 & .660 &  16.0\% &  11.1\% &      0.54 &      0.67 &      0.62 \\
    & 50\% &   0.26 & .858 &  44.6\% &  16.8\% &      0.71 &      0.80 &      0.63 \\
    & 100\% &   0.83 & .918 &  73.2\% &  22.5\% &      1.08 &      0.94 &      0.62 \\
    & 200\% &   0.95 & .946 &  78.1\% &  33.4\% &      1.84 &      1.09 &      0.62 \\
\cline{1-9}
\multirow{4}{*}{30\%} & 10\% &  -5.94 & .649 &  41.3\% &  31.9\% &      0.56 &      0.69 &      0.63 \\
    & 50\% &   0.70 & .840 &  69.0\% &  38.9\% &      0.96 &      0.89 &      0.65 \\
    & 100\% &   0.94 & .912 &  90.9\% &  49.9\% &      1.68 &      1.05 &      0.65 \\
    & 200\% &   0.98 & .949 &  92.9\% &  62.4\% &      3.28 &      1.39 &      0.66 \\
\cline{1-9}
\multirow{4}{*}{50\%} & 10\% &  -3.89 & .644 &  61.8\% &  51.9\% &      0.57 &      0.70 &      0.65 \\
    & 50\% &   0.83 & .845 &  83.7\% &  59.0\% &      1.16 &      0.97 &      0.66 \\
    & 100\% &   0.95 & .909 &  95.6\% &  69.5\% &      2.13 &      1.18 &      0.70 \\
    & 200\% &   0.99 & .945 &  96.7\% &  76.9\% &      3.96 &      1.42 &      0.70 \\
\cline{1-9}
\multirow{4}{*}{70\%} & 10\% &  -2.83 & .608 &  76.8\% &  71.0\% &      0.59 &      0.72 &      0.67 \\
    & 50\% &   0.83 & .823 &  90.2\% &  74.9\% &      1.30 &      0.99 &      0.76 \\
    & 100\% &   0.95 & .904 &  98.0\% &  83.5\% &      2.46 &      1.24 &      0.78 \\
    & 200\% &   0.99 & .941 &  98.2\% &  87.3\% &      4.67 &      1.51 &      0.77 \\
\cline{1-9}
\multirow{4}{*}{90\%} & 10\% &  -2.42 & .593 &  92.3\% &  90.1\% &      0.61 &      0.73 &      0.70 \\
    & 50\% &   0.76 & .802 &  96.7\% &  91.5\% &      1.46 &      1.04 &      0.99 \\
    & 100\% &   0.91 & .867 &  99.0\% &  93.1\% &      2.79 &      1.31 &      1.14 \\
    & 200\% &   0.97 & .925 &  99.4\% &  95.3\% &      5.56 &      1.66 &      1.16 \\
\bottomrule
\end{tabular}
\hfill
\small
\setlength{\tabcolsep}{2.0pt}
\begin{tabular}{@{\extracolsep{3pt}}llllllrrr}
\multicolumn{9}{c}{QM9 8K Subset Band Gap} \\
\toprule
\multicolumn{2}{c}{\thead[tc]{Label Noise}} & \multicolumn{4}{c}{\thead[tc]{Noisy Label Detection\\{\scriptsize\mdseries Thresholding on $\sigma_i$}}} & \multicolumn{3}{c}{\thead[tc]{GPR Accuracy\\{\scriptsize\mdseries 5-fold CV MAE (eV)}}}\\ \cline{1-2} \cline{3-6} \cline{7-9}
\multirow{2}{*}{\thead{Rate}} & \multirow{2}{*}{\thead{Level}} & \multirow{2}{*}{\thead{R\textsuperscript{2}}} & \multirow{2}{*}{\thead{AUC}} & \multicolumn{2}{c}{\thead[tc]{Precision\\{\mdseries\scriptsize at Recall Level}}} & \multicolumn{3}{c}{\thead[tc]{\mdseries \scriptsize (Pristine Data: 0.30)}}\\ \cline{5-6}
& & & & \thead[tl]{70\%} & \thead[tl]{95\%} & \thead[tl]{Plain} & \thead[tl]{Basic} & \thead[tl]{Full} \\
\midrule
\multirow{4}{*}{10\%} & 10\% & -61.36 & .532 &  10.6\% &  10.1\% &      0.30 &      0.30 &      0.28 \\
    & 50\% &  -1.85 & .728 &  18.5\% &  10.7\% &      0.32 &      0.32 &      0.28 \\
    & 100\% &   0.30 & .822 &  35.2\% &  11.9\% &      0.36 &      0.34 &      0.29 \\
    & 200\% &   0.82 & .883 &  58.3\% &  13.2\% &      0.48 &      0.40 &      0.29 \\
\cline{1-9}
\multirow{4}{*}{30\%} & 10\% & -24.46 & .531 &  31.1\% &  30.3\% &      0.30 &      0.30 &      0.28 \\
    & 50\% &  -0.28 & .706 &  42.7\% &  31.7\% &      0.34 &      0.34 &      0.30 \\
    & 100\% &   0.65 & .809 &  65.3\% &  33.2\% &      0.44 &      0.38 &      0.30 \\
    & 200\% &   0.92 & .877 &  83.0\% &  36.9\% &      0.74 &      0.46 &      0.31 \\
\cline{1-9}
\multirow{4}{*}{50\%} & 10\% & -17.56 & .530 &  51.0\% &  50.2\% &      0.30 &      0.30 &      0.28 \\
    & 50\% &  -0.04 & .692 &  61.4\% &  51.6\% &      0.37 &      0.35 &      0.31 \\
    & 100\% &   0.68 & .790 &  77.4\% &  53.4\% &      0.51 &      0.41 &      0.33 \\
    & 200\% &   0.92 & .865 &  89.9\% &  57.4\% &      0.89 &      0.49 &      0.34 \\
\cline{1-9}
\multirow{4}{*}{70\%} & 10\% & -15.13 & .528 &  70.9\% &  69.9\% &      0.30 &      0.30 &      0.28 \\
    & 50\% &  -0.10 & .675 &  78.2\% &  71.0\% &      0.39 &      0.36 &      0.34 \\
    & 100\% &   0.60 & .753 &  85.6\% &  72.3\% &      0.59 &      0.43 &      0.39 \\
    & 200\% &   0.88 & .823 &  93.3\% &  74.5\% &      1.06 &      0.53 &      0.42 \\
\cline{1-9}
\multirow{4}{*}{90\%} & 10\% & -15.16 & .517 &  90.4\% &  90.0\% &      0.30 &      0.30 &      0.29 \\
    & 50\% &  -0.35 & .648 &  92.5\% &  90.3\% &      0.41 &      0.37 &      0.37 \\
    & 100\% &   0.41 & .721 &  94.9\% &  90.8\% &      0.63 &      0.44 &      0.47 \\
    & 200\% &   0.78 & .773 &  97.5\% &  91.3\% &      1.16 &      0.53 &      0.59 \\
\bottomrule
\end{tabular}
\end{table*}

We demonstrate the capability of the proposed method using the QM7~\cite{blum970MillionDruglike2009,Rupp2012} and QM9~\cite{ramakrishnanQuantumChemistryStructures2014} data sets. The data sets consists of minimum-energy 3D geometry of small organic molecules and their associated properties such as band gap, polarizability, and heat capacity, \textit{etc}. A marginalized graph kernel is used as the covariance function between graph representations that encode the spatial arrangement and topology of the molecules~\cite{tangPredictionAtomizationEnergy2019}. To carry out the experiments, we add normally distributed noise to the labels in the data sets with a series of combinations of noise rates and noise levels. Here, noise rate is defined as the percentage of labels that we will artificially corrupt, while noise level is the ratio between the noise and the standard deviation of the pristine labels.

From \Cref{table:qm7}, we can see that our proposed method can consistently improve the accuracy of the trained GPR model even in the presence of very high noise rate. Moreover, our method can also capture a high fraction of the noisy labels in most scenarios. Generally speaking, the ability of the method to distinguish noisy and clean labels generally increases with noise level but decreases with noise rate. From an adversarial point of view, this indicates that large numbers of small perturbations within the model's confidence interval is likely to cause the most degradation to the performance of the trained model.

\section{Conclusion}

A method that uses Gaussian process regression to identify noisy real-valued labels is introduced to improve models trained on data sets of high output noise rate. To infer the magnitude of noise on a per-label basis, we use maximum likelihood estimation to optimize a heteroscedastic noise model and learn a element-wise Tikhonov regularization term. We show that this is closely related to denoising using leave-one-out cross-validation. A simple multiplicative updating scheme is designed to solve the numerical optimization problem. The scheme monotonically converges in a broad range of test cases and has defeated our extensive efforts in seeking a counterexample. The capability of the noise detection method is demonstrated on both synthetic examples and real-world scientific data sets.

While we presented a preliminary analysis of the multiplicative update scheme's convergence behavior for a special case, future work is necessary for a thorough determination of the algorithm's region of convergence. There is also strong practical interest in the adaptation of the method to GPR extensions such as those based on non-Gaussian likelihoods and Nystr\"om~\cite{NIPS2000_19de10ad,liMakingLargescaleNystrom2010} or hierarchical low-rank approximations~\cite{liuParallelHierarchicalBlocked2020,rebrovaStudyClusteringTechniques2018a,chavezScalableMemoryEfficientKernel2020}, as well as in multi-level optimizers that can take advantage of the fast convergence of the multiplicative algorithm for the joint optimization of both the kernel hyperparameters and the noise model.

\section{Acknowledgment}
This work was supported by the Luis W. Alvarez Postdoctoral Fellowship at Lawrence Berkeley National Laboratory.

\bibliography{bibliography}
\bibliographystyle{bibstyle}

\clearpage

\appendix
\numberwithin{equation}{section}
\renewcommand{\theequation}{\thesection.\arabic{equation}}

\paragraph{Notations} The Einstein summation convention for repeating indices is implied for all variables except for $i$ and $j$.

\section{Gradient of the negative log-likelihood function}
\label{section:gradient-of-negative-log-likelihood-function}

Starting with the negative log-likelihood of the Gaussian process in \Cref{eq:negative-log-likelihood}, we can derive its partial derivative with respect to $\BB{\Sigma}$:
\begin{align}
\frac{
    \D \LL
}{
    \D \BB{\Sigma}
}
&= \frac{
    \D \log \Det{\td{\B{K}}}
}{
    \D \BB{\Sigma}
} + \frac{
    \D \left( \B{y}^\T \td{\B{K}}^{-1} \B{y} \right)
}{
    \D \BB{\Sigma}
} \label{eq:dL-dSigma-initial-appendix} \\
&= \mathrm{tr}\left( \td{\B{K}}^{-1} \frac{\D \td{\B{K}}}{\D \BB{\Sigma}} \right)
- \left( \td{\B{K}}^{-1} \B{y} \right)^\T \frac{\D \td{\B{K}}}{\D \BB{\Sigma}} \left( \td{\B{K}}^{-1} \B{y} \right) \\
&= \td{\B{K}}^{-1}{}_{kl} \frac{\D \td{\B{K}}_{kl}}{\D \BB{\Sigma}_{ij}}
- \left( \td{\B{K}}^{-1} \B{y} \right)_k \frac{\D \td{\B{K}}_{kl}}{\D \BB{\Sigma}_{ij}} \left( \td{\B{K}}^{-1} \B{y} \right)_l \\
&= \td{\B{K}}^{-1}{}_{kl} \frac{\D \BB{\Sigma}_{kl}}{\D \BB{\Sigma}_{ij}}
- \left( \td{\B{K}}^{-1} \B{y} \right)_k \frac{\D \BB{\Sigma}}{\D \BB{\Sigma}_{ij}} \left( \td{\B{K}}^{-1} \B{y} \right)_l \\
&= \td{\B{K}}^{-1}{}_{kl} \delta_{ki} \delta_{lj}
- \left( \td{\B{K}}^{-1} \B{y} \right)_k \delta_{ki} \delta_{lj} \left( \td{\B{K}}^{-1} \B{y} \right)_l \\
&= \td{\B{K}}^{-1}{}_{ij}
- \left( \td{\B{K}}^{-1} \B{y} \right)_i \left( \td{\B{K}}^{-1} \B{y} \right)_j \\
&= \td{\B{K}}^{-1} - \left( \td{\B{K}}^{-1} \B{y} \right) \left( \td{\B{K}}^{-1} \B{y} \right)^\T \label{eq:dL-dSigma-full-appendix}
\end{align}

\end{document}